\newcommand{\R}{\mathbb{R}}
\newcommand{\N}{\mathbb{N}}
\newtheorem{fact}[theorem]{Theorem}
\newtheorem{defn}[theorem]{Definition} 
\title{OrthoGrad Improves Neural Calibration}
\begin{document}

\maketitle

\begin{abstract}

We study $\perp$Grad, a geometry-aware modification to gradient-based optimization that constrains descent directions to address overconfidence, a key limitation of standard optimizers in uncertainty-critical applications. By enforcing orthogonality between gradient updates and weight vectors, $\perp$Grad alters optimization trajectories without architectural changes. On CIFAR-10 with 10\% labeled data, $\perp$Grad matches SGD in accuracy while achieving statistically significant improvements in test loss ($p=0.05$), predictive entropy ($p=0.001$), and confidence measures. These effects show consistent trends across corruption levels and architectures. $\perp$Grad is optimizer-agnostic, incurs minimal overhead, and remains compatible with post-hoc calibration techniques.

Theoretically, we characterize convergence and stationary points for a simplified $\perp$Grad variant, revealing that orthogonalization constrains loss reduction pathways to avoid confidence inflation and encourage decision-boundary improvements. Our findings suggest that geometric interventions in optimization can improve predictive uncertainty estimates at low computational cost.

\end{abstract}

\section{Introduction}

Neural networks are increasingly deployed in settings where prediction confidence must be reliable, motivating both algorithmic and post-hoc approaches to calibration. While most intrinsic methods target loss function design or regularization, we open a new research direction: using geometric constraints on optimization trajectories to improve calibration.

Specifically, we provide the first systematic study of $\perp$Grad (read as "OrthoGrad") for calibration, which alters descent directions by projecting gradients orthogonally to layer weight vectors during training. This simple geometric constraint changes the optimization trajectory in a way that systematically reduces overconfidence without degrading accuracy. We evaluate $\perp$Grad empirically on CIFAR-10 and CIFAR-10C using ResNet18 and WideResNet-28-10, with a focus on the low-data regime. Our results show that $\perp$Grad improves calibration metrics and robustness under input corruption while remaining compatible with standard post-hoc calibration techniques.

Theoretically, we prove convergence of a simplified $\perp$Grad variant and characterize its fixed points in positive homogeneous networks. These results suggest a mechanism by which $\perp$Grad prevents loss reduction via confidence scaling alone, encouraging decision-boundary improvements instead. Together, our findings show that geometry-aware optimization can enhance calibration.

\section{Background}

A classifier is said to be \emph{calibrated} when predicted confidence scores reflect the true likelihood of correctness. This property is critical when predictive uncertainty informs downstream decisions. Guo et al.\ introduced temperature scaling in \cite{guo2017calibration}, demonstrating that modern neural networks are often poorly calibrated despite high accuracy.

Calibration techniques fall into two categories. \emph{Intrinsic} methods improve calibration during training, e.g., via loss modifications \cite{kumar2018trainable,mukhoti2020calibrating}, data augmentation, or mixup \cite{thulasidasan2019mixup}. \emph{Post-hoc} methods, including temperature scaling, Platt scaling, and isotonic regression, adjust model outputs after training without changing weights. Post-hoc methods depend on held-out validation data and cannot correct uncertainty misestimation rooted in model internals.

Recent work has explored orthogonality to improve deep networks, primarily for generalization and stability: orthogonal convolutional filters \cite{wang2020orthogonal}, inter-layer gradient orthogonality \cite{xie2017all,tuddenham2022orthogonalising}, and orthogonalized descent to accelerate grokking \cite{prieto2025grokking}. We extend this line to calibration, studying $\perp$Grad, which projects layer-wise gradients orthogonally to their corresponding weights at each update. While $\perp$Grad was originally proposed to stabilize training near grokking \cite{prieto2025grokking}, we examine its effects on model calibration under limited data and distribution shift.

To our knowledge, no prior work has investigated the link between orthogonalized gradient updates and calibration. We provide the first empirical evaluation of whether geometry-aware descent can improve metrics such as expected calibration error (ECE) and predictive entropy without harming accuracy.

\section{Theoretical Analysis} 

We first formally define the $\perp$Grad algorithm for continuous loss functions on $\R^p$. There are two important variations: the practical implementation used in our experiments (with renormalization) and a simplified variant (without renormalization) for which we can prove convergence. The following definition is the practical implementation used in our experiments and in \cite{prieto2025grokking}, and the definition for the simplified variant is found in the appendix. 

\begin{defn} For a differentiable loss function $L: \R^p \rightarrow \R$, learning rate $\eta > 0$, numerical stability constant $\epsilon > 0$, we define the $\perp$Grad update procedure as follows: 
\begin{enumerate}
\item Begin with $\theta \in \R^p$, 
\item Next let $g = \nabla L(\theta) - \frac{\braket{\nabla L(\theta), \theta}}{||\theta||^2} \theta$. This is the orthogonalized gradient. 
\item If $||g|| = 0$, we do not perform an update and return $\theta$. Otherwise we continue. 
\item We now renormalize the gradient: 
\[ 
\hat{g} = \left( \frac{||\nabla L(\theta)||}{||g|| + \epsilon} \right) g. 
\] 
\item Finally, we update $\theta' = \theta - \eta \hat{g}$. 
\end{enumerate}
\end{defn}

Note that we re-normalize the orthogonalized gradient to have the same magnitude as the original gradient (with a slight modification for numerical stability). Unfortunately this renormalization leads to slightly less desirable theoretical properties. In particular, if $\perp$Grad with renormalization converges, it converges to a stationary point for $L$, but we cannot guarantee convergence itself.

However, without renormalization, we can prove convergence under standard assumptions:

\begin{theorem}\label{convergence} Suppose $L: \R^p \rightarrow \R$ is bounded from below, differentiable, and $\nabla L$ is Lipschitz with Lipschitz constant $k$. Then for any $\eta \in (0, 1/k)$, and any initialization $\theta_0 \in \R^p$, $\perp$Grad (without gradient renormalization) will converge to some $\theta^*$ satisfying: 
\[
\left| \braket{\nabla L(\theta^*), \theta^*} \right| = ||\theta^*|| \cdot ||\nabla L(\theta^*)||. 
\]
In particular, $\nabla L(\theta^*)$ is parallel to $\theta^*$.
\end{theorem}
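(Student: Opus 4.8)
The plan is to run the standard descent-lemma argument for the orthogonalized update, exploiting two algebraic facts that make $\perp$Grad especially well-behaved. Write $g(\theta) = \nabla L(\theta)$ and let
\[
g_\perp(\theta) = g(\theta) - \frac{\braket{g(\theta), \theta}}{\|\theta\|^2}\,\theta
\]
be the component of the gradient orthogonal to $\theta$, so that the unnormalized update is $\theta_{t+1} = \theta_t - \eta\, g_\perp(\theta_t)$. The two facts I would isolate first are: (i) since $g_\perp(\theta)$ is the orthogonal projection of $g(\theta)$ onto $\theta^\perp$, we have $\braket{g(\theta), g_\perp(\theta)} = \|g_\perp(\theta)\|^2$; and (ii) $g_\perp(\theta) \perp \theta$ by construction.

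First I would apply the descent lemma, which is available because $\nabla L$ is $k$-Lipschitz: $L(\theta_{t+1}) \le L(\theta_t) + \braket{g(\theta_t), \theta_{t+1}-\theta_t} + \tfrac{k}{2}\|\theta_{t+1}-\theta_t\|^2$. Substituting the update and invoking fact (i) collapses the cross term to $-\eta\|g_\perp(\theta_t)\|^2$, giving
\[
L(\theta_{t+1}) \le L(\theta_t) - \eta\Big(1 - \tfrac{k\eta}{2}\Big)\|g_\perp(\theta_t)\|^2 .
\]
For $\eta \in (0, 1/k)$ the coefficient $1 - k\eta/2$ exceeds $1/2$, so $L$ strictly decreases whenever $g_\perp \ne 0$. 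Since $L$ is bounded below, telescoping this inequality yields $\sum_{t} \|g_\perp(\theta_t)\|^2 < \infty$, hence $\|g_\perp(\theta_t)\| \to 0$.

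Next I would use fact (ii) to control the iterates themselves. Because the step is orthogonal to $\theta_t$, a Pythagorean identity gives $\|\theta_{t+1}\|^2 = \|\theta_t\|^2 + \eta^2\|g_\perp(\theta_t)\|^2$, so the squared norms are nondecreasing and, by the summability just established, converge to a finite limit; in particular the iterates stay bounded and (assuming $\theta_0 \ne 0$) bounded away from the origin. Boundedness then yields a subsequence $\theta_{t_j} \to \theta^*$ with $\theta^* \ne 0$, and since $g_\perp$ is continuous away from the origin, passing to the limit gives $g_\perp(\theta^*) = 0$. This is exactly the stated conclusion: $g_\perp(\theta^*) = 0$ means $g(\theta^*)$ has no component orthogonal to $\theta^*$, i.e. $\nabla L(\theta^*)$ is parallel to $\theta^*$, equivalently $\left|\braket{\nabla L(\theta^*), \theta^*}\right| = \|\theta^*\|\,\|\nabla L(\theta^*)\|$ by the equality case of Cauchy--Schwarz.

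The hard part will be upgrading this subsequential statement to convergence of the full sequence to a single $\theta^*$: the estimates above directly guarantee only that the limit points are stationary (they form a connected compact subset, lying on the sphere of the limiting radius, on which $g_\perp$ vanishes), not that such a point is unique. Closing this gap rigorously would require either an additional regularity hypothesis (for instance a \L{}ojasiewicz-type inequality or isolation of stationary points) or reading ``converges'' as subsequential convergence together with $\|g_\perp(\theta_t)\| \to 0$. I would also dispatch the degenerate initialization $\theta_0 = 0$ separately, where $g_\perp$ is undefined: there either $0$ is already a fixed point satisfying the conclusion trivially, or the first step moves to $-\eta\nabla L(0) \ne 0$, after which the argument above applies unchanged.
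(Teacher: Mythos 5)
Your argument follows the same route as the paper's proof: the descent lemma combined with the projection identity $\braket{\nabla L(\theta),g_\perp(\theta)}=\|g_\perp(\theta)\|^2$ to get the per-step decrease $\eta(1-k\eta/2)\|g_\perp(\theta_t)\|^2$, then telescoping against the lower bound on $L$ to obtain $\sum_t\|g_\perp(\theta_t)\|^2<\infty$. Your additional use of orthogonality to show $\|\theta_{t+1}\|^2=\|\theta_t\|^2+\eta^2\|g_\perp(\theta_t)\|^2$, hence that the iterates are bounded and stay away from the origin, is a genuine improvement: the paper never verifies boundedness or that the limit is nonzero (which matters, since $g_\perp$ is undefined at the origin and the theorem's conclusion divides by $\|\theta^*\|$ implicitly through Cauchy--Schwarz).

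The step you flag as the hard part --- upgrading subsequential convergence to convergence of the whole sequence --- is exactly where the paper's own proof is deficient. The paper deduces convergence of $(\theta_n)$ from $\sum_n\|\theta_{n+1}-\theta_n\|^2<\infty$ ``by the Cauchy criterion,'' but square-summability of increments does not imply the Cauchy property: take increments of size $1/n$ in a fixed direction, or, consistent with the norm identity here, iterates that spiral around a sphere of slowly growing radius with step sizes $1/n$. So your instinct is right that closing this gap needs something extra (a \L{}ojasiewicz-type inequality, isolated stationary points, or reading the conclusion as ``every limit point $\theta^*$ satisfies $g_\perp(\theta^*)=0$''). In short: your proposal is correct as far as it goes, matches the paper's strategy, and correctly establishes everything the paper's proof actually establishes; the one step you could not complete is asserted, not proved, in the paper.
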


The proof (see Appendix) follows standard arguments. While we describe orthogonalization at the model level for clarity, the result extends naturally to layer-wise orthogonalization with only notational modifications.

This implies that $\perp$Grad converges to points where further loss reduction requires rescaling model weights and biases. As noted in \cite{prieto2025grokking}, for positive homogeneous networks this corresponds to increasing confidence rather than altering the decision boundary. Hence, $\perp$Grad reaches stationary points w.r.t.\ the decision boundary and avoids reducing loss by simply inflating confidence.

For the renormalized variant used in our experiments, convergence guarantees do not hold, but if it converges, it must converge to a stationary point. This represents a theory-practice gap in our work: while we prove convergence for the non-renormalized case, experiments use renormalization for numerical stability. However, orthogonal updates fundamentally constrain trajectories regardless of renormalization: in positive homogeneous networks, $\perp$Grad still encourages decision-boundary improvements over confidence scaling.

Empirically, we observed no significant performance differences between renormalized and non-renormalized variants in a 10-seed comparison on CIFAR-10 and CIFAR-10C, suggesting renormalization preserves the core geometric benefits while maintaining practical stability. The constraint structure remains: gradient components parallel to weights are removed, forcing optimization along decision-boundary-improving directions.

\section{CIFAR-10 Results}

\subsection{Training on CIFAR-10} 

Using 20 different seeds, we selected a random $10\%$ of the training dataset, for a total of $500$ images per class. For each seed, we trained a ResNet18 model (modified to fit the CIFAR-10 dataset) for 100 epochs with a learning rate of $0.01$, momentum set to $0.9$, and weight decay at $5e-4$. We used a batch size of $64$ and added random flips and crops for data augmentation. The base optimizer was PyTorch's SGD, which we compared to $\perp$Grad following the implementation in \cite{prieto2025grokking}. Orthogonalization is applied layer-wise: each layer's gradient is projected orthogonally to that layer's weight vector. Note that this implementation includes gradient renormalization; while this variant does not enjoy the convergence guarantees we prove, we include it here for continuity with prior work and to isolate the effect of orthogonalization on calibration metrics. The average results across the 20 runs are shown in Table 1 and reliability diagrams can be found in the appendix. 

\begin{table}[h!]
\begin{center}
\begin{tabular}{ |c|c|c|c|c|c| } 
\hline
& SGD & $\perp$Grad & Effect Size & $95\%$ Confidence Interval & $p$ value \\ 
\hline
Top1 Accuracy & 75.18 & {\bf 75.27} & -0.05 & $(-0.67, 0.57)$ & 0.86 \\ 
Top5 Accuracy & 97.67 & {\bf 97.81} & -0.35 & $(-0.97, 0.28)$ & 0.28 \\
Loss & 1.26 & {\bf 1.19 } & 0.64 & $(0.005, 1.28)$ & 0.05 \\
ECE & 0.168 & {\bf 0.161} & 0.48 & $(-0.15, 1.11)$  & 0.14 \\
Brier Score & 0.408 & {\bf 0.400} & 0.28 &  $(-0.34, 0.91)$  & 0.37 \\
Entropy & 0.208 & {\bf 0.224 } & -1.11 & $(-1.77, -0.44)$  & 0.001 \\
Max Softmax & 0.920 & {\bf 0.914 } & 1.06 & $(0.40, 1.72)$ &  0.002 \\
Max Logit & 13.58 &{\bf 13.03 } & 1.52 & $(0.82, 2.22)$ &  $2.5 \times 10^{-5}$ \\
Logit Variance & 45.73 & {\bf 42.30 } & 2.00  & $(1.25, 2.77)$  & $2 \times 10^{-7}$ \\
% Weight Norm & 7.82 & 7.70 \\  
\hline
\end{tabular}
\end{center}
\caption{\textbf{CIFAR-10 test results across 20 seeds comparing SGD and $\perp$Grad.}
Accuracy remains unchanged, but $\perp$Grad consistently improves loss, entropy, and softmax/logit statistics.
These differences suggest improved calibration and reduced overconfidence under $\perp$Grad.
Bold indicates better performance (higher accuracy, entropy; lower loss, ECE, etc.), regardless of statistical significance.}
\label{table:1}
\end{table}

This experiment showed no significant difference in accuracy metrics. However, $\perp$Grad achieved statistically significant improvements in test loss ($p=0.05$), predictive entropy ($p=0.001$), and confidence measures: lower maximum softmax ($p=0.002$), maximum logit ($p=2.5 \times 10^{-5}$), and logit variance ($p=2 \times 10^{-7}$). 

While not reaching statistical significance, $\perp$Grad showed consistent trends toward improved ECE and Brier Score, and better confidence-correctness correlation (0.467 vs 0.445). The robustness of significant effects across multiple confidence-related metrics suggests systematic reduction in overconfidence without accuracy degradation. 

While orthogonalization has been hypothesized to introduce implicit regularization by reducing weight norm growth, our experiments do not support this effect. The final weight vector norms did not differ significantly between the two optimization methods ($79.69$ for SGD compared to $79.72$ for $\perp$Grad, $p=0.36$). 

\subsection{Temperature Scaling} 

Next, we evaluated the impact of temperature scaling on model calibration between the two optimizer choices. There was a significant difference ($p=0.003$) between optimal temperatures, with SGD requiring higher temperature scaling ($T=2.80$) compared with $\perp$Grad ($T=2.66$). However, there was no difference between the temperature scaled ECE or Brier scores (see appendix). 

Notably this means that, unlike the results in \cite{wang2021rethinking}, $\perp$Grad appears to remain amenable to post-hoc calibration and is able to improve loss and entropy by instead optimizing the decision boundary without allowing for naive scaling of outputs. Additionally, the fact that $\perp$Grad required a significantly lower temperature for calibration further indicates that $\perp$Grad converges to better calibrated models without sacrificing accuracy. 

\subsection{CIFAR-10C Evaluation}

Finally, we turn to examining how the resulting models behaved under input corruption using the CIFAR-10C dataset. We found that $\perp$Grad maintained calibration and loss improvements across corruption types and severity levels. We observed similar results to the clean experiment, with negligible differences between SGD and $\perp$Grad in accuracy metrics, but the effects on loss, entropy, max softmax/logit values, and logit variance persisted (although diminished in statistical significance). 

\begin{figure}[h!]
\centering
\includegraphics[width=1\textwidth]{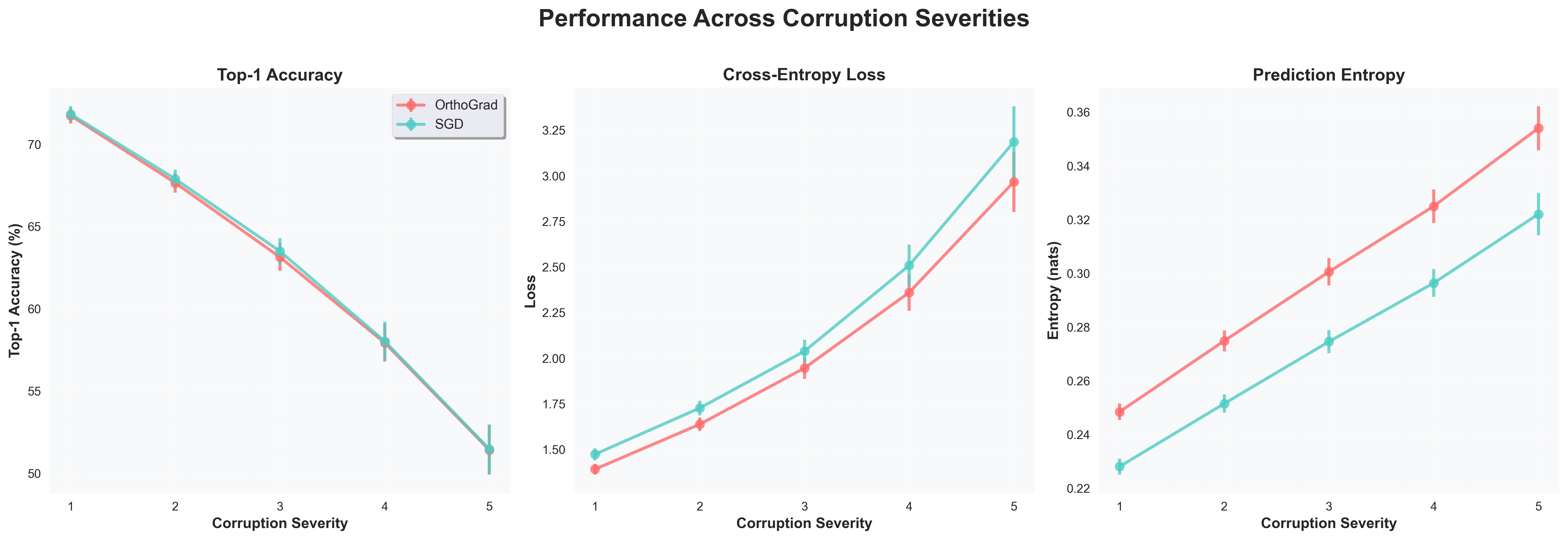}
\caption{\textbf{Comparative trends across CIFAR-10C corruption levels.}
$\perp$Grad consistently shows better loss and predictive entropy across corruption levels without sacrificing accuracy,
indicating improved robustness under input noise.}
\label{fig:combined_metrics}
\end{figure}

In all, it appears that orthogonalizing gradients had no meaningful impact on accuracy, yet it improved the model's calibration by decreasing loss and confidence and increasing entropy. 

\section{Additional Empirical Results} 

\subsection{Extended Training Results}

To examine calibration and robustness under extreme overfitting, we extended ResNet18 training to 1000 epochs with all other hyperparameters fixed. This stress test was designed to reveal optimizer behavior beyond the typical training horizon. Early stopping was not used. Due to computational constraints, results are from a single seed and should not be interpreted statistically, though they are consistent with our short-run multi-seed findings.

SGD achieved higher clean test accuracy ($70.5\%$ vs.\ $65.8\%$ for $\perp$Grad), but $\perp$Grad consistently outperformed under corruption: from level 2 onward it yielded better loss and ECE, and from level 3 onward better Top1 accuracy. Overall average accuracy across CIFAR-10C was also higher for $\perp$Grad ($60.4\%$ vs.\ $59.0\%$). Detailed comparisons appear in the appendix.

At corruption level 5, the overfit $\perp$Grad model outperformed not only the overfit SGD model but also every seed from the 100-epoch experiments. Assessing the statistical robustness of this behavior is left to future work.

\subsection{WideResNet-28-10}

We additionally ran a 5 seed experiment using WideResNet-28-10. All other hyperparameters were kept the same as the original ResNet18 experiment in Section 4. The results confirm that calibration benefits generalize across architectures: $\perp$Grad again achieved significant improvements in loss ($p=0.004$), entropy ($p=2 \times 10^{-4}$), and confidence measures while maintaining accuracy parity. The consistency of these effects across ResNet18 and WideResNet-28-10 suggests the geometric constraint addresses a fundamental optimization bias toward overconfidence that is architecture-independent. These results persisted under corruption. More details appear in the appendix. 

\section{Discussion} 

This work demonstrates that gradient orthogonalization via $\perp$Grad systematically improves neural calibration without accuracy loss. The method achieves statistically significant improvements in test loss, entropy, and confidence measures across architectures (ResNet18, WideResNet-28-10) and conditions (clean data, corruption, extended training). As an optimizer-agnostic intervention with minimal overhead, $\perp$Grad offers a practical approach for applications requiring reliable uncertainty estimates.

Key limitations include restriction to CIFAR-10/CIFAR-10C and the theory-practice gap between our convergence proof (non-renormalized) and experimental implementation (renormalized). The extended training results, while promising, require validation across multiple seeds. $\perp$Grad remains compatible with post-hoc calibration, mitigating concerns about regularization-calibration conflicts \cite{wang2021rethinking}.

Theoretically, we prove convergence for a simplified variant and characterize favorable stationary points where loss reduction requires decision-boundary improvements rather than confidence scaling. While our convergence proof applies to the non-renormalized case, empirical comparison shows no significant differences between variants, suggesting the geometric benefits transfer to the practical implementation.

Our results indicate that geometric constraints on optimization trajectories offer a promising direction for improving neural calibration. $\perp$Grad provides consistent calibration improvements across architectures and conditions while maintaining the simplicity and broad applicability essential for practical deployment.

\bibliography{mybib}

\newpage 

\section{Appendix}

The appendix is organized as follows. First we describe the stationary points for $\perp$Grad with renormalization. Next we formally define the $\perp$Grad algorithm without renormalization and prove Theorem 3.1, showing that in the non-renormalized case $\perp$Grad is guaranteed to converge under standard assumptions. We then discuss the stable points for $\perp$Grad (with and without renormalization), showing that in the case of positive homogenous classification networks they correspond with stationary points with respect to the decision boundary. 

\subsection{$\perp$Grad with Renormalization}

We first note that if $\perp$Grad with renormalization converges, it converges to a stationary point for $L$. Note that this does not imply that $\perp$Grad with renormalization will converge, and in fact with a fixed learning rate we suspect that in many cases it will not converge, but we leave a deeper discussion of this potential lack of convergence to future work. Here we characterize the stationary points for $\perp$Grad with renormalization. 

\begin{lemma} If $\perp$Grad with renormalization converges along the descent pathway $(\theta_n)_{k \in \N}$, then either $\braket{\nabla L(\theta_n), \theta_n} = ||\theta_n|| \cdot ||\nabla L(\theta_n)||$ for some $n \in \N$ at which point the $\perp$Grad trajectory stabilizes, or $||\nabla L(\theta_n)|| \rightarrow 0$. 
\end{lemma}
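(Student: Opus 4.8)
The plan is to assume the trajectory $(\theta_n)$ converges to some limit $\theta^*$ and extract the stated dichotomy from the fact that consecutive iterates must then become arbitrarily close. First I would record the exact size of an update: whenever step 5 of the definition fires, $\theta_{n+1} - \theta_n = -\eta \hat{g}_n$, so that
\[
||\theta_{n+1} - \theta_n|| = \eta \, \frac{||\nabla L(\theta_n)|| \cdot ||g_n||}{||g_n|| + \epsilon},
\]
where $g_n$ is the orthogonalized gradient at $\theta_n$. I would also record the Pythagorean identity $||g_n||^2 = ||\nabla L(\theta_n)||^2 - \braket{\nabla L(\theta_n), \theta_n}^2/||\theta_n||^2$, which shows that $||g_n|| = 0$ holds exactly when $|\braket{\nabla L(\theta_n),\theta_n}| = ||\theta_n|| \cdot ||\nabla L(\theta_n)||$, i.e.\ at the Cauchy--Schwarz-equality (parallel) condition.

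Next I would split into two cases. If $||g_n|| = 0$ at some finite $n$, then step 3 suppresses the update, the iteration is constant thereafter, and by the identity above this is precisely the first alternative (the trajectory stabilizes at a parallel point). Otherwise $||g_n|| > 0$ for every $n$, so an update occurs at every step, and convergence forces $||\theta_{n+1} - \theta_n|| \to 0$, hence $\tfrac{a_n b_n}{b_n + \epsilon} \to 0$ where $a_n := ||\nabla L(\theta_n)||$ and $b_n := ||g_n||$. Here I would first note that, since each update is orthogonal to $\theta_n$, we have $||\theta_{n+1}||^2 = ||\theta_n||^2 + \eta^2 ||\hat{g}_n||^2$, so $||\theta_n||$ is nondecreasing and $||\theta^*|| \ge ||\theta_0|| > 0$; this rules out $\theta^* = 0$ and lets me invoke continuity of $\nabla L$ and of the orthogonal projection near $\theta^*$ to conclude $a_n \to a^* := ||\nabla L(\theta^*)||$ and $b_n \to b^*$. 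Passing to the limit gives $a^* b^* = 0$, and if $a^* = 0$ then $||\nabla L(\theta_n)|| \to 0$, which is the second alternative.

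The main obstacle is the residual possibility $a^* > 0$ and $b^* = 0$: the iterates converge to a point where $\nabla L(\theta^*)$ is parallel to $\theta^*$ but nonzero, approached only in the limit, with no finite step satisfying $||g_n|| = 0$. Because $\epsilon > 0$ keeps the update map continuous, such a $\theta^*$ is in fact a fixed point of that map despite not being stationary for $L$, so it evades both alternatives as written literally. To close the argument I would observe that this is exactly the parallel condition $|\braket{\nabla L(\theta^*),\theta^*}| = ||\theta^*|| \cdot ||\nabla L(\theta^*)||$ holding at $\theta^* = \lim_n \theta_n$, and hence should be folded into the first alternative read at the limit point; alternatively one argues it is non-generic by linearizing the update map at such a fixed point. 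I expect essentially all the difficulty to concentrate in this boundary case, the remaining steps being the standard ``vanishing increments plus continuity'' bookkeeping.
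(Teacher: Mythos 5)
Your argument follows essentially the same route as the paper's own proof: express the update norm as $\eta\,\|\nabla L(\theta_n)\|\,\|g_n\|/(\|g_n\|+\epsilon)$, observe that convergence of $(\theta_n)$ forces this to vanish, and try to conclude that either the orthogonal component dies at a finite step or $\|\nabla L(\theta_n)\|\to 0$. The ``residual possibility'' you flag---$\|g_n\|\to 0$ while $\|\nabla L(\theta_n)\|\to c>0$, i.e.\ the iterates approach a point where the gradient is parallel to the weights but nonzero, without any finite iterate satisfying the parallel condition exactly---is a genuine gap, and it is precisely where the paper's own proof breaks. The paper deduces from $\|v_k\|\le\|\nabla L(\theta_n)\|$ that $\liminf \|\nabla L(\theta_n)\|/(1+\epsilon/\|v_k\|)\ge c/(1+\epsilon/c)>0$, but the inequality runs the wrong way: a \emph{lower} bound on the denominator $1+\epsilon/\|v_k\|$ gives an \emph{upper} bound on the quotient, and when $\|v_k\|\to 0$ the update norm $c\,\|v_k\|/(\|v_k\|+\epsilon)$ tends to $0$ no matter how large $c$ is. So the contradiction the paper derives is not actually obtained, and your instinct that all the difficulty concentrates in this boundary case is correct.

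Your proposed repair---reading the first alternative at the limit point, i.e.\ concluding only that $|\braket{\nabla L(\theta^*),\theta^*}|=\|\theta^*\|\cdot\|\nabla L(\theta^*)\|$ at $\theta^*=\lim_n\theta_n$---is the right fix, and it is all that the vanishing-increments-plus-continuity argument can deliver. (Your bookkeeping is otherwise sound, including the useful observation that $\|\theta_n\|$ is nondecreasing, so $\theta^*\ne 0$ and the projection is continuous at the limit.) Your alternative suggestion of ruling the case out by linearization will not work unconditionally: for a quadratic loss with minimizer off the ray through $\theta^*$, one can check that the renormalized update multiplies the transverse displacement by roughly $1-C\eta/\epsilon$ for some $C>0$, so such a non-stationary fixed point is \emph{attracting} whenever $\eta$ is small relative to $\epsilon$, and the dichotomy in the lemma genuinely fails there. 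In the practically relevant regime $\eta\gg\epsilon$ the point is strongly repelling, which is presumably why this never matters empirically, but that requires an extra hypothesis relating $\eta$ and $\epsilon$ that neither your argument nor the paper's supplies.
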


\begin{proof} First suppose $\perp$Grad converges along the descent pathway $(\theta_n)_{k \in \N}$. If we have $\braket{\nabla L(\theta_n), \theta_n} = ||\theta_n|| \cdot ||\nabla L(\theta_n)||$ for some $k$, it is easy to see that the $\perp$Grad trajectory stabilizes. We now assume that $\braket{\nabla L(\theta_n), \theta_n} \neq 0$ for all $k \in \N$. For each $k \in \N$, let $v_k, \hat{g}_k$ be as in the definition of $\perp$Grad. Since $(\theta_n)$ converges, we have 
\[ 
||\hat{g}_k|| = \frac{ ||\nabla L(\theta_n)|| \cdot ||v_k||}{||v_k|| + \epsilon} =   \frac{ ||\nabla L(\theta_n)|| }{1 + \frac{\epsilon}{||v_k||}} \rightarrow 0. 
\]
Suppose for a contradiction that $\lim \sup || \nabla L(\theta_n)|| = c > 0$. By passing to a subsequence we can assume without loss of generality that $\lim || \nabla L(\theta_n)|| = c$. Since $||v_k|| \leq || \nabla L(\theta_n)||$ by definition, we know 
\[
\lim \sup 1 + \frac{\epsilon}{||v_k||} \geq 1 + \frac{\epsilon}{c}, 
\]
and therefore
\[ 
\lim \inf \frac{ ||\nabla L(\theta_n)|| }{1 + \frac{\epsilon}{||v_k||}} \geq \frac{c}{1+\frac{\epsilon}{c}} > 0. 
\]
This contradicts that $||g_n|| \rightarrow 0$ and it must be the case that $||\nabla L(\theta_n)|| \rightarrow 0$.
\end{proof}

Notably, this means that if the $\perp$Grad procedure outlined in \cite{prieto2025grokking} converges (nontrivially), it converges to a stationary point for $L$. Combined with the fact that we cannot guarantee convergence for the renormalized version of $\perp$Grad, this may mitigate some of the theoretically proposed benefits discussed in Section 3.

\subsection{Convergence Results}

Next we define the $\perp$Grad procedure without renormalization: 

\begin{defn} For a differentiable loss function $L: \R^p \rightarrow \R$, learning rate $\eta > 0$, we define the $\perp$Grad (without renormalization) update procedure as follows: 
\begin{enumerate}
\item Begin with $\theta \in \R^p$, 
\item Next let $g = \nabla L(\theta) - \frac{\braket{\nabla L(\theta), \theta}}{||\theta||^2} \theta$. This is the orthogonalized gradient. 
\item We update $\theta' = \theta - \eta g$. 
\end{enumerate}
\end{defn}

Without renormalization, we are able to prove some desirable convergence properties. Not only that the algorithm converges under standard assumptions, but additionally the stable points have desirable properties when it comes to positive homogenous model architectures. 

{\bf Theorem 3.1} {\it Suppose $L: \R^p \rightarrow \R$ is bounded from below, differentiable, and $\nabla L$ is Lipschitz with Lipschitz constant $k$. Then for any $\eta \in (0, 1/k)$, and any non-zero initialization $\theta_0 \in \R^p$, $\perp$Grad (without gradient renormalization) will converge to some $\theta^*$ satisfying: 
\[
\left| \braket{\nabla L(\theta^*), \theta^*} \right| = ||\theta^*|| \cdot ||\nabla L(\theta^*)||. 
\]
In particular, $\nabla L(\theta^*)$ is parallel to $\theta^*$.
}

\begin{proof} 
Let $\theta_n$ be any point along the $\perp$Grad pathway and let 
\[
g_n = \nabla L(\theta_n) - \braket{ \nabla L(\theta_n), \theta_n} \frac{\theta_n}{||\theta_n||^2}
\]
denote the orthogonalized gradient. Define the update $\theta_{n+1} = \theta_n - \eta g_n$. Since $\nabla L$ is Lipschitz with constant $k$, we apply a standard descent bound:
\[
\begin{aligned}
L(\theta_{n+1}) & \leq L(\theta_n) - \eta \braket{ \nabla L(\theta_n), g_n} + \frac{\eta^2 k}{2} ||g_n||^2 \\
& = L(\theta_n) - \eta \left( ||\nabla L(\theta_n)||^2 - \frac{\braket{\nabla L(\theta_n), \theta_n}^2}{||\theta_n||^2} \right) + \frac{\eta^2 k}{2} ||g_n||^2 \\
& = L(\theta_n) - \eta ||g_n||^2 + \frac{\eta^2 k}{2} ||g_n||^2 \\
& = L(\theta_n) - \eta \left( 1 - \frac{\eta k}{2} \right) ||g_n||^2.
\end{aligned}
\]

Since $\eta < 1/k$, the coefficient $\eta \left(1 - \frac{\eta k}{2} \right)$ is positive. Therefore, the loss strictly decreases unless $g_n = 0$.Summing over $n = 0$ to $T-1$, we get:
\[
L(\theta_0) - L(\theta_T) \geq \eta \left(1 - \frac{k \eta}{2} \right) \sum_{n=0}^{T-1} ||g_n||^2.
\]

Since $L$ is bounded below by some $L_{\inf} \in \R$, we have:
\[
\sum_{n=0}^{\infty} ||g_n||^2 \leq \frac{L(\theta_0) - L_{\inf}}{\eta \left(1 - \frac{k \eta}{2} \right)}.
\]

We now note that since for each $n \in \N$, $\braket{ g_n, \theta_n} = 0$, we know 
\[ 
\sum_{k=0}^\infty || \theta_{n+1} - \theta_n||^2 = \eta^2 \sum_{n=0}^{\infty} ||g_n||^2 < \infty
\] 
and by the Cauchy criterion it must be the case that the sequence $(\theta_n)$ converges to some $\theta^*$. 

We now let 
\[ 
g^* = \nabla L(\theta^*) - \braket{ \nabla L(\theta^*), \theta^*} \frac{\theta^*}{||\theta^*||^2}
\]
By continuity of $\nabla L(\cdot)$ it is easy to see that $g^* = 0$, and the desired result follows immediately. 
\end{proof}

We note here that when it comes to performing orthogonalization for a model with parameters $\theta \in \R^p$, the proof still holds if the orthogonalization is occurring on the entire parameter set at once, or at the level of a partition of $\theta$ (in particular at the layer level). The conclusion will only differ in that each component of the stabilized gradient $\nabla L(\theta^*)$ will be parallel to the corresponding component of the vector $\theta^*$ and each component may differ by scalar multiples. For the purposes of understanding the stability of decision boundaries in positive homogenous classification models, this distinction makes no difference. 

\subsection{Decision Boundary Properties of Stable Points} 

In this section we prove that a stationary point for non-renormalized $\perp$Grad exhibits favorable properties for positive homogenous models. In particular, if the orthogonalization occurs on the level of a positive homogenous section of the network, $\perp$Grad will converge to solutions where loss cannot be improved by changing the decision boundary. 

\begin{defn} We say that for any $P \subset[1, p]$, a model $f_\theta: \R^n \rightarrow \R^k$ with parameters $\theta \in \R^p$ is {\bf positive homogenous with respect to $P$} if for any $\theta \in \R^p$ and any $c > 0$, for any $x \in \R^n$, we have 
\[ 
f_{\theta'} (x) = \alpha f_\theta(x) 
\]
where $\alpha > 0$ and $\theta' \in \R^p$ satisfying: for all $i \in P$, $\theta_i' = c \theta_i$ and for all $i \notin P$, $\theta_i' = \theta_i$. 
\end{defn}

Note here that an MLP is positive homogenous with respect to any $P$ containing the union of layers of the model. 

\begin{fact} For $\theta \in \R^p$, let $f_\theta : \R^n \rightarrow \R^k$ be a model outputting logits for a classification task. Additionally, suppose there exists a partition $\mathcal{P}$ of $[1, p]$ such that for every $P \in \mathcal{P}$, $f$ is positive homogenous with respect to $P$. For a loss function $L$ as in Theorem 3.1, if $\perp$Grad orthogonalizes gradients with respect to each $P \in \mathcal{P}$, it will converge to local minimum with respect to the decision boundary. 
\end{fact}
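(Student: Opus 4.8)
The plan is to read the geometry at the convergence point straight off Theorem 3.1 and then translate the phrase ``does not change the decision boundary'' into ``moves along the positive-homogeneity orbit.'' First I would invoke the block-level version of Theorem 3.1 recorded in the remark following its proof: at the limit $\theta^*$, for every block $P \in \mathcal{P}$ the restricted gradient $\nabla_P L(\theta^*)$ is parallel to the restricted parameter vector $\theta^*_P$, so there is a scalar $\lambda_P$ with $\nabla_P L(\theta^*) = \lambda_P\, \theta^*_P$. Zero-padding each block back into $\R^p$ via an inclusion $\iota_P$, this says the full gradient lies in the subspace $S = \mathrm{span}\{ \iota_P(\theta^*_P) : P \in \mathcal{P}\}$ spanned by the block-wise scaling directions.

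Next I would identify $S$ with the tangent space to confidence scaling. For a fixed block $P$, the curve $c \mapsto \theta^{(c)}$ obtained by multiplying the coordinates in $P$ by $c > 0$ has velocity $\iota_P(\theta^*_P)$ at $c = 1$. By positive homogeneity with respect to $P$, along this curve the logits satisfy $f_{\theta^{(c)}}(x) = \alpha(c)\, f_{\theta^*}(x)$ with $\alpha(c) > 0$ for every input $x$. Since the classifier's prediction is $\arg\max_i f_\theta(x)_i$ and the argmax is invariant under multiplication by a positive scalar, every such curve, and hence every direction in $S$, leaves the induced decision rule unchanged and only rescales confidence. Thus $S$ consists precisely of deformations that do not alter the decision boundary.

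The core step is then a one-line orthogonality argument. For an arbitrary perturbation $v \in \R^p$, decompose $v = v_S + v_\perp$ with $v_S \in S$ and $v_\perp \in S^\perp$. Because $\nabla L(\theta^*) \in S$, the directional derivative is $\braket{\nabla L(\theta^*), v} = \braket{\nabla L(\theta^*), v_S}$. In particular, any purely decision-boundary-changing direction ($v \in S^\perp$) gives $\braket{\nabla L(\theta^*), v} = 0$, so to first order the loss is stationary in every such direction; equivalently, any genuine descent direction must carry a nonzero component along a scaling direction. This is exactly the claim that at $\theta^*$ the loss can be reduced only by rescaling confidence, not by moving the decision boundary.

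The main obstacle is interpretive rather than computational: the gradient identity delivers a first-order (stationarity) statement, whereas ``local minimum with respect to the decision boundary'' literally asks for a second-order condition on the restriction of $L$ to $S^\perp$ (positive semidefiniteness of the Hessian there), which convergence to a stationary point does not by itself guarantee. I would therefore either state the theorem as stationarity in decision-boundary directions, which is what the geometry cleanly yields, or add a hypothesis (for instance that the Hessian of $L$ restricted to $S^\perp$ is positive semidefinite at $\theta^*$), with the standard caveat that gradient-type methods may in principle converge to saddle points. A secondary technical point to nail down is that the positive-homogeneity factor $\alpha(c)$ is independent of $x$ and that scalings of distinct blocks compose to a single positive factor, so that $S$ is genuinely tangent to one decision-boundary-preserving orbit; both follow by applying the definition block by block.
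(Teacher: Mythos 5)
Your proposal is correct and follows essentially the same route as the paper's proof: invoke the block-level form of Theorem 3.1 to conclude that $\nabla L(\theta^*)$ lies along the block-scaling directions $\iota_P(\theta^*_P)$, then use positive homogeneity with respect to each $P \in \mathcal{P}$ to see that moving along those directions only multiplies the logits by positive factors and hence leaves the argmax, and so the decision boundary, unchanged. Your closing caveat is well taken and applies equally to the paper: its proof likewise establishes only first-order stationarity in decision-boundary-changing directions (its conclusion is literally that ``locally improving $L$ does not change the decision boundary''), so the phrase ``local minimum'' in the statement should really be read as ``stationary point,'' exactly as you suggest.
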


\begin{proof} First note that at the level of each $P \in \mathcal{P}$, by Theorem 3.1 $\perp$Grad will converge to a solution such that \[ 
\left| \braket{\nabla L(\theta_P^*), \theta_P^*} \right| = ||\theta_P^*|| \cdot ||\nabla L(\theta^*)_P||. 
\] 
It therefore follows that for the entire collection of parameters, traveling along $\nabla L(\theta^*)$ only scales the model parameters, where each collection of parameters $P \in \mathcal{P}$ will be scaled by a different magnitude. Since $f$ is positive homogenous with respect to each $P \in \mathcal{P}$, and the decision boundary is determined by the maximum output of $f_{\theta^*}$, it follows that locally improving $L$ does not change the decision boundary. 
\end{proof}

\subsection{Renormalized vs Non-Renormalized Comparison}

To address the theory-practice gap between our convergence proof (non-renormalized) and experimental implementation (renormalized), we conducted a 10-seed comparison on CIFAR-10 using identical hyperparameters. Results show no statistically significant differences, validating that renormalization preserves the core geometric benefits of orthogonalization.

\begin{table}[h!]
\begin{center}
\begin{tabular}{ |c|c|c|c|c|c|c| } 
\hline
Variant & Accuracy & Loss & ECE & Entropy & Max Softmax & Max Logit \\ 
\hline
Renormalized & 74.89 & 1.198 & 0.164 & 0.225 & 0.914 & 12.62 \\ 
Non-Renormalized & 74.14 & 1.230 & 0.167 & 0.220 & 0.916 & 12.56 \\
\hline
\end{tabular}
\end{center}
\caption{\textbf{Comparison of renormalized vs non-renormalized $\perp$Grad variants on CIFAR-10.}
Values shown are averages across 10 random seeds. No significant differences observed, indicating renormalization preserves geometric benefits while maintaining numerical stability.}
\label{table:renorm_comparison}
\end{table}

\subsection{Additional Figures} 

\begin{table}[h!]
\begin{center}
\begin{tabular}{ |c||c|c||c|c| } 
\hline
Optimizer & \multicolumn{2}{c||}{ECE} & \multicolumn{2}{c|}{Brier Score} \\
\cline{2-5}
& Before & After & Before & After \\ 
\hline
SGD & 0.168 & 0.015 & 0.041 & 0.034 \\ 
$\perp$Grad & {\bf 0.161} & 0.015 & {\bf 0.040} & 0.034 \\ 
\hline
\end{tabular}
\end{center}
\caption{\textbf{Expected Calibration Error (ECE) and Brier Score before and after temperature scaling on CIFAR-10.}
Both optimizers benefit similarly from temperature scaling, but $\perp$Grad starts with slightly better raw calibration.
This shows that $\perp$Grad is compatible with post-hoc calibration techniques, preserving gains after temperature correction.
Bold indicates better performance, regardless of statistical significance.}
\label{table:calibration}
\end{table}

\begin{figure}[h!]
\centering
\includegraphics[width=0.8\textwidth]{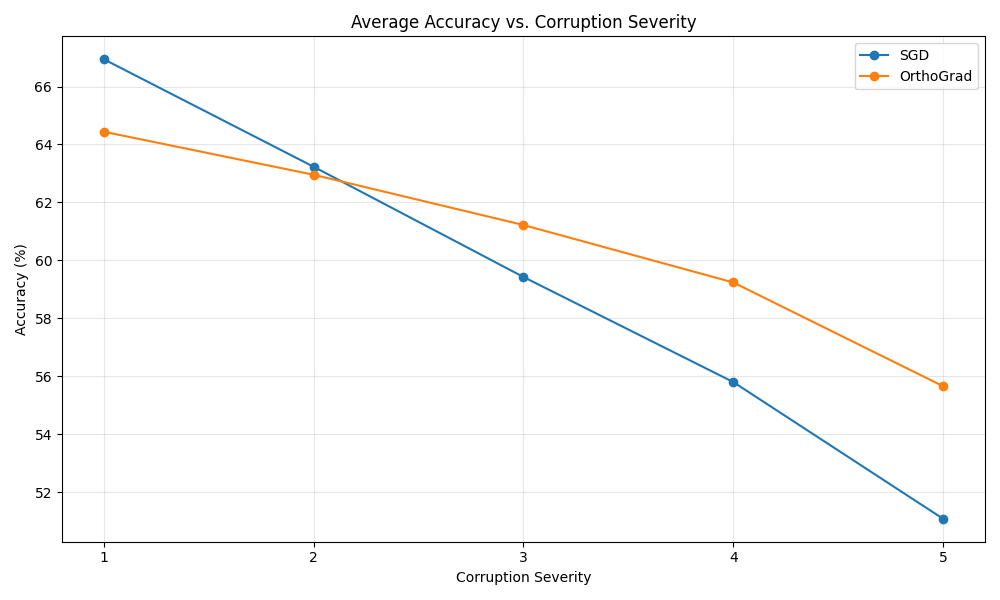}
\caption{\textbf{Overtrained ResNet-18 accuracy across CIFAR-10C corruption levels.}
In the overtrained environment accuracy initially favors SGD, however $\perp$Grad surpasses it at higher severity.}
\label{fig:extended_accuracy}
\end{figure}

\begin{figure}[h!]
\centering
\includegraphics[width=0.8\textwidth]{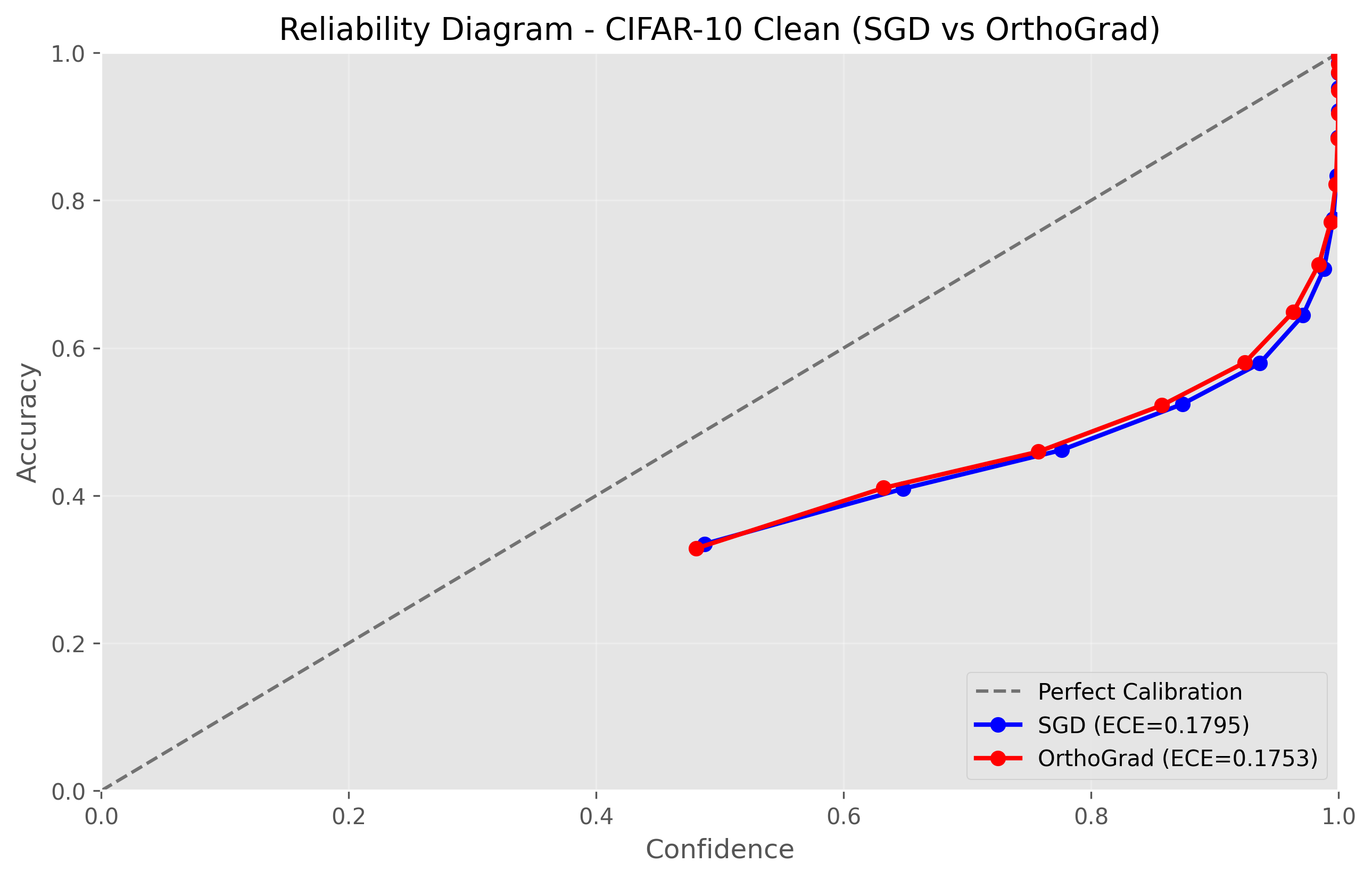}
\caption{\textbf{Reliability Diagram for ResNet18 on CIFAR-10.}
Average reliability diagram across $20$ seeds across entire CIFAR-10 test dataset. 
$\perp$Grad exhibits slightly better reliability than SGD.}
\label{fig:reliability_clean}
\end{figure}

\begin{figure}[h!]
\centering
\includegraphics[width=0.8\textwidth]{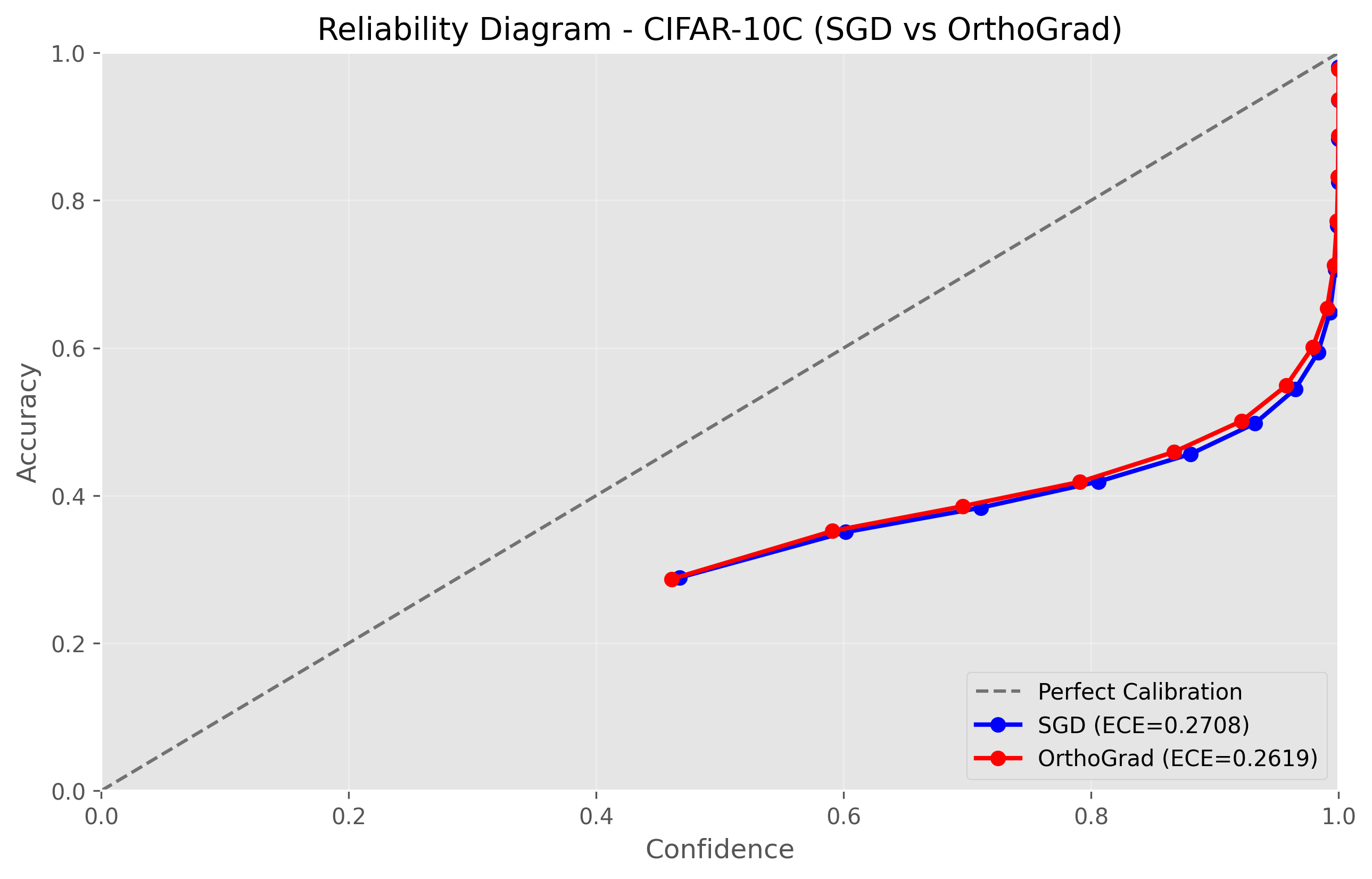}
\caption{\textbf{Reliability Diagram for ResNet18 on CIFAR-10C.}
Average reliability diagram across $20$ seeds across entire CIFAR-10C test dataset. 
$\perp$Grad exhibits slightly better reliability than SGD.}
\label{fig:reliability_corrupted}
\end{figure}

\begin{table}[h!]
\begin{center}
\begin{tabular}{|c||c|c||c|c||c|c||c|c|}
\hline
\multirow{2}{*}{Corruption Level} & \multicolumn{2}{c||}{Accuracy (\%)} & \multicolumn{2}{c||}{Loss} & \multicolumn{2}{c||}{ECE} & \multicolumn{2}{c|}{Conf-Acc Corr.} \\
\cline{2-9}
& SGD & $\perp$Grad & SGD & $\perp$Grad & SGD & $\perp$Grad & SGD & $\perp$Grad \\
\hline
1 & {\bf 67} & 64 & {\bf 1.79} & 1.91 & {\bf 0.23} & 0.24 & {\bf 0.388} & 0.382 \\
2 & 63 & 63 & 2.02 & {\bf 1.93} & 0.26 & {\bf 0.24} & 0.366 & {\bf 0.384} \\
3 & 59 & {\bf 61} & 2.62 & {\bf 2.00} & 0.29 & {\bf 0.25} & 0.347 & {\bf 0.377} \\
4 & 55 & {\bf 59} & 2.53 & {\bf 2.08} & 0.32 & {\bf 0.26} & 0.329 & {\bf 0.374} \\
5 & 51 & {\bf 55} & 2.88 & {\bf 2.24} & 0.36 & {\bf 0.28} & 0.301 & {\bf 0.343} \\
\hline
\end{tabular}
\end{center}
\caption{\textbf{Accuracy, loss, ECE, and confidence--accuracy correlation on CIFAR-10C across corruption levels (single seed, 1000 epochs).}
$\perp$Grad degrades more gracefully under corruption, outperforming SGD from corruption level 3 onward.
Calibration and loss are consistently better, even though clean accuracy is slightly lower. 
Results suggest robustness gains under overfitting conditions.
Bold indicates better performance, regardless of statistical significance.}
\label{table:combined}
\end{table}

\begin{table}[h!]
\begin{center}
\begin{tabular}{ |c|c|c|c|c|c| } 
\hline
& SGD & $\perp$Grad & Effect Size & $95\%$ Confidence Interval & $p$ value \\ 
\hline
Top1 Accuracy & 78.54 & {\bf 79.00} & -0.35 & $(-1.60, 0.90)$ & 0.59 \\ 
Top5 Accuracy & {\bf 98.05} & 97.84 & 0.50 & $(-0.76, 1.76)$ & 0.44 \\
Loss & 1.05 & {\bf 0.88 } & 2.56 & $(0.89, 4.24)$ & 0.004 \\
ECE & 0.14 & {\bf 0.12} & 2.40 & $(0.77, 4.02)$  & 0.015 \\
Brier Score & 0.35 & {\bf 0.33} & 0.86 &  $(-0.43, 2.15)$  & 0.21 \\
Entropy & 0.19 & {\bf 0.25 } & -4.18 & $(-6.40, -1.97)$  & $2 \times 10^{-4}$ \\
Max Softmax & 0.92 & {\bf 0.91 } & 2.91 & $(1.13, 4.69)$ &  0.002 \\
Max Logit & 12.11 &{\bf  8.68 } & 11.20 & $(6.14, 16.27)$ &  $3 \times 10^{-6}$ \\
Logit Variance & 31.37 & {\bf 13.83 } & 15.45  & $(8.57, 22.34)$  & $6 \times 10^{-6}$ \\
\hline
\end{tabular}
\end{center}
\caption{\textbf{CIFAR-10 WideResNet-28-10 test results across 5 seeds comparing SGD and $\perp$Grad.}
Accuracy remains unchanged, but $\perp$Grad consistently improves loss, entropy, and softmax/logit statistics.
These differences suggest improved calibration and reduced overconfidence under $\perp$Grad.
Bold indicates better performance, regardless of statistical significance.}
\label{table:wideresnet}
\end{table}

\begin{figure}[h!]
\centering
\includegraphics[width=0.8\textwidth]{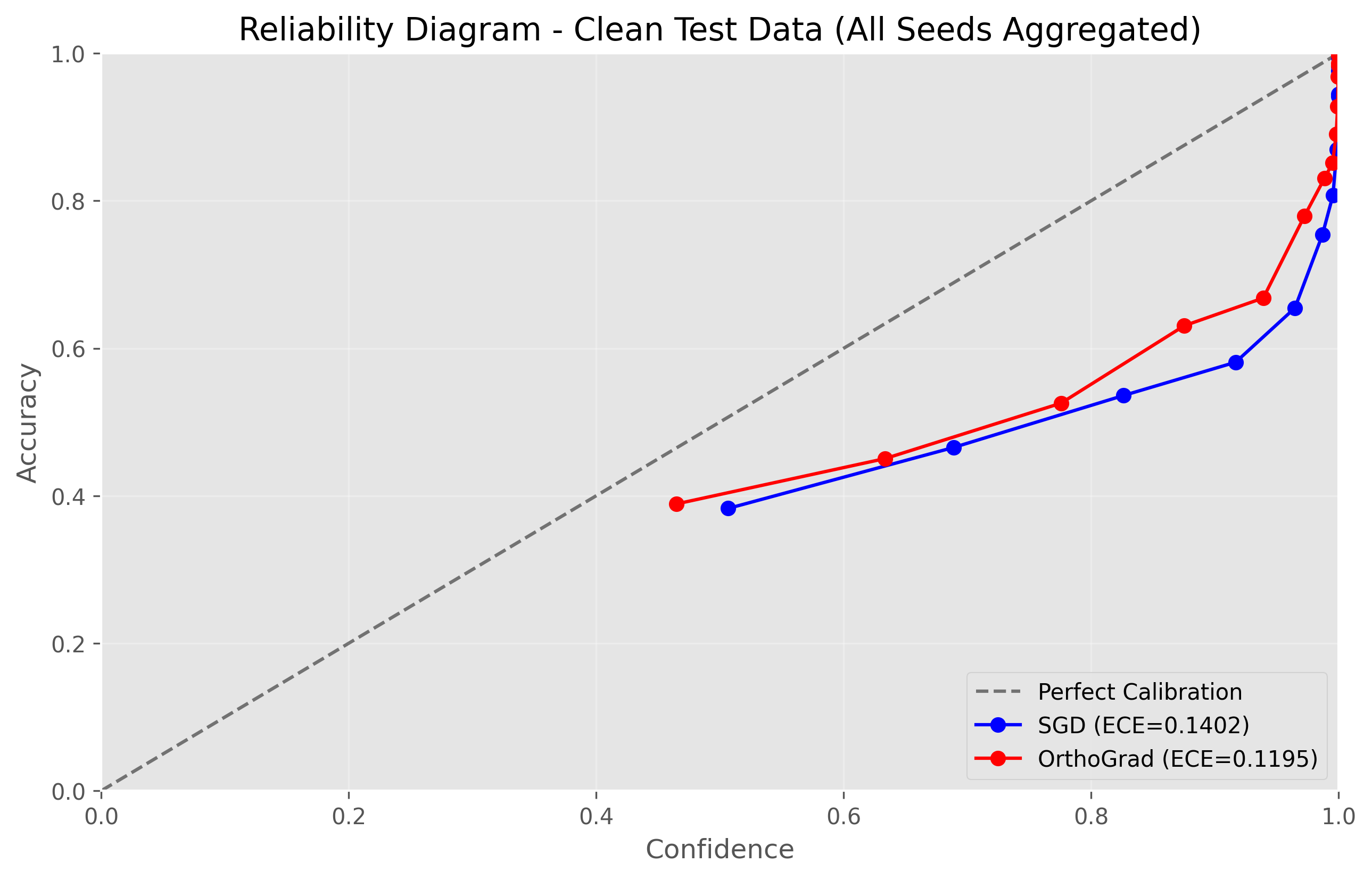}
\caption{\textbf{Reliability Diagram for WideResNet-28-10 on CIFAR-10.}
Average reliability diagram across $5$ seeds. 
$\perp$Grad exhibits consistently better reliability than SGD.}
\label{fig:wideresnet_clean}
\end{figure}

\begin{figure}[h!]
\centering
\includegraphics[width=0.8\textwidth]{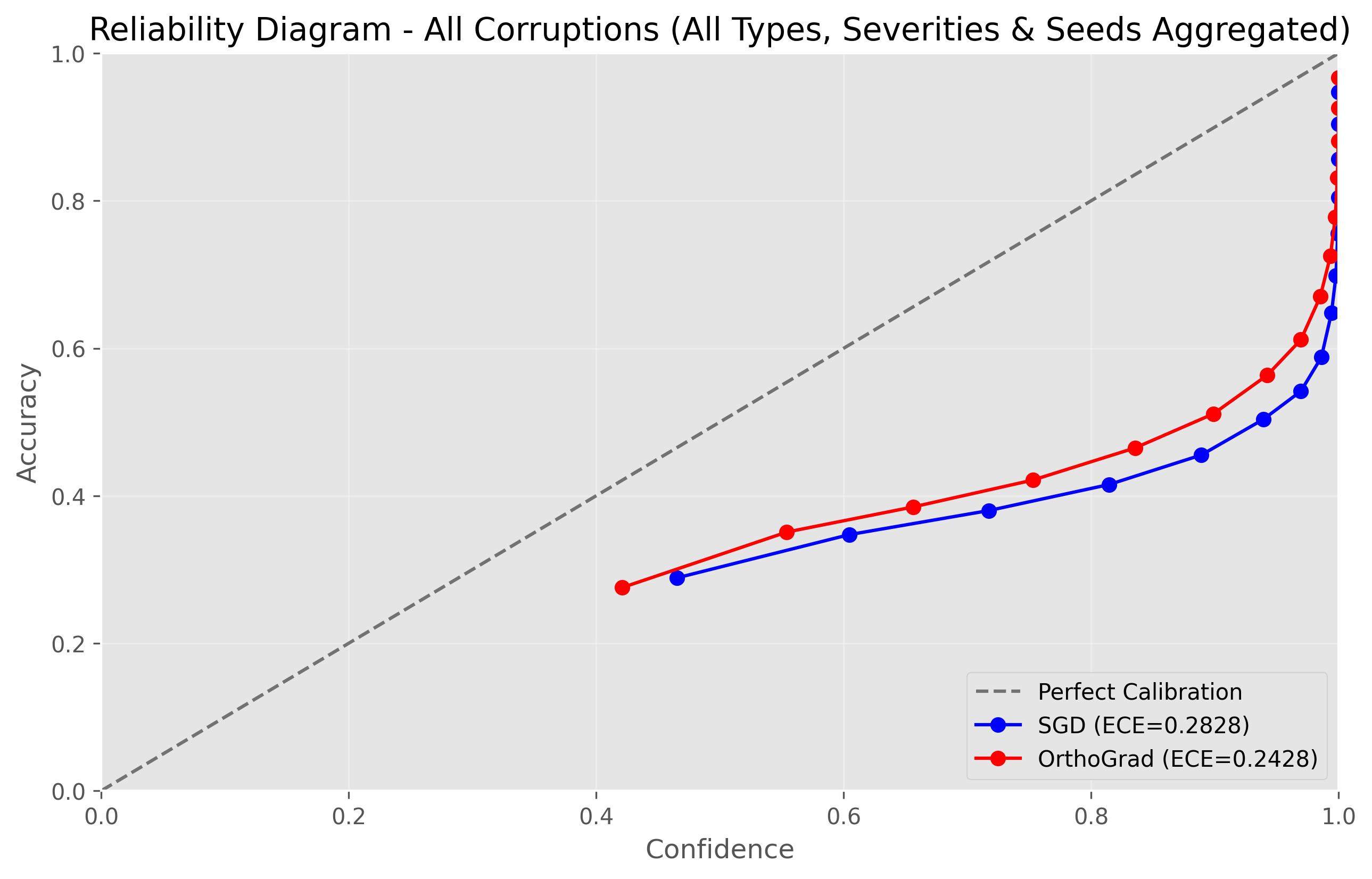}
\caption{\textbf{Reliability Diagram for WideResNet-28-10 on CIFAR-10C.}
Average reliability diagram across $5$ seeds across entire CIFAR-10C test dataset. 
$\perp$Grad exhibits consistently better reliability than SGD.}
\label{fig:wideresnet_corrupted}
\end{figure}

\end{document}